\newtheorem{thm}{Theorem}
\newcommand{\bJ}{\bm{J}}
\newcommand{\bT}{\bm{T}}
\newcommand{\bM}{\bm{M}}
\newcommand{\bC}{\bm{C}}
\newcommand{\bg}{\bm{g}}
\newcommand{\bxi}{\bm{\xi}}
\newcommand{\bq}{\bm{q}}
\newcommand{\br}{\bm{r}}
\newcommand{\bv}{\bm{v}}
\newcommand{\bV}{\bm{V}}
\newcommand{\bw}{\bm{w}}
\newcommand{\bu}{\bm{u}}
\newcommand{\by}{\bm{y}}
\newcommand{\btau}{\bm{\tau}}
\newcommand{\bzero}{\bm{0}}
\newcommand{\bI}{\bm{I}}
\newcommand{\bR}{\bm{R}}
\newcommand{\bLambda}{\bm{\Lambda}}
\newcommand{\bGamma}{\bm{\Gamma}}
\newcommand{\bzeta}{\bm{\zeta}}
\title{\LARGE \bf
Passivity-based Decentralized Control for Collaborative Grasping of Under-Actuated Aerial Manipulators
}
\author{Jinyeong Jeong and Min Jun Kim
\thanks{This work was supported by the National Research Foundation of Korea (NRF) grant funded by the Korea government (MSIT)  No. 2021R1C1C1005232, and No. 2021R1A4A3032834.}

\thanks{The authors are with Intelligent Robotic Systems Laboratory, Korea Advanced Institute of Science and Technology, Daejeon, Republic of Korea. {E-mail: {\tt\small \{first name.last name\}@kaist.ac.kr}}}
}
\begin{document}

\maketitle
\thispagestyle{empty}
\pagestyle{empty}

\begin{abstract}
This paper proposes a decentralized passive impedance control scheme for collaborative grasping using under-actuated aerial manipulators (AMs). The AM system is formulated, using a proper coordinate transformation, as an inertially decoupled dynamics with which a passivity-based control design is conducted. Since the interaction for grasping can be interpreted as a feedback interconnection of passive systems, an arbitrary number of AMs can be modularly combined, leading to a decentralized control scheme. Another interesting consequence of the passivity property is that the AMs automatically converge to a certain configuration to accomplish the grasping. Collaborative grasping using 10 AMs is presented in simulation.
\end{abstract}

\section{Introduction}
As multi-rotors can reach anywhere in 3D space with little restriction, they have been applied to applications such as monitoring, mapping, and surveillance. Apart from these, there are active missions that involve physical interaction such as contact inspection \cite{trujillo2019novel, tognon2019truly, bodie2020active} and slung load transportation \cite{bernard2011autonomous, bernard2010load}. Indeed, a number of studies have proposed the design and control of aerial manipulators (AMs) \cite{korpela2012mm, orsag2013modeling, fumagalli2016mechatronic, sarkisov2019development, pounds2011yale}.

To accomplish active tasks, in this paper, we focus on AMs in which an $n$ degree-of-freedom (DOF) non-redundant robotic manipulator is mounted on an under-actuated multi-rotor. As the payload of an AM is typically limited, lightweight designs of manipulators have been proposed \cite{suarez2018design, bellicoso2015design}, however usually at the cost of reduced control performance.
To overcome the limited payload, this paper presents a control strategy that enables the collaboration of multiple AMs, within a scenario of collaborative grasping. In particular, we accomplish the scenario by applying a passive impedance controller to each AM in a decentralized manner.

However, dynamic coupling between an under-actuated floating base and a robotic manipulator makes the control problem complicated even for a single AM. To reduce such complexity, some studies developed fully-actuated multi-rotors by tilting propellers in a proper way \cite{tognon2019truly, trujillo2019novel}. Since the propellers are not collinear anymore, however, full actuation can be achieved only with reduced payload capacity.

With an under-actuated multi-rotor, several studies have proposed the design and control of AMs to accomplish compliant interaction. \cite{forte2012impedance, fumagalli2012modeling, lippiello2012exploiting} achieved stable physical interaction with mechanical elements that introduce compliance to the end-effector. \cite{acosta2014robust, kim2018passive, yuksel2019aerial} proposed an energy-based control method, which is essentially similar to the passivity-based control approaches. When a multi-DOF manipulator is used, \cite{yang2014dynamics, garofalo2018task} have shown that a proper coordinate transformation leads to an inertially decoupled dynamics with which control design becomes much more convenient. Inspired by prior work, we will utilize a passivity-based control method with the inertially decoupled AM dynamics.

Control problem becomes even more complicated for the collaborative grasping scenario due to the coupled dynamics between multiple AMs. To simplify the problem, some studies assume that an object can be rigidly attached to the end-effectors using, for example, magnetics \cite{jimenez2022precise, tagliabue2019robust}. In this setup, multi-agent coordination is of interest rather than the realization of stable grasping. To tackle the grasping problem explicitly, \cite{mohammadi2016cooperative, yang2015hierarchical} posed an optimization problem, and \cite{caccavale2015cooperative} designed an internal impedance controller using a grasp matrix. However, since these methods require a centralized controller, scalability is questionable when many AMs are involved in the mission.

\begin{figure} [!t]
    \centering
	{\includegraphics[scale=0.7]{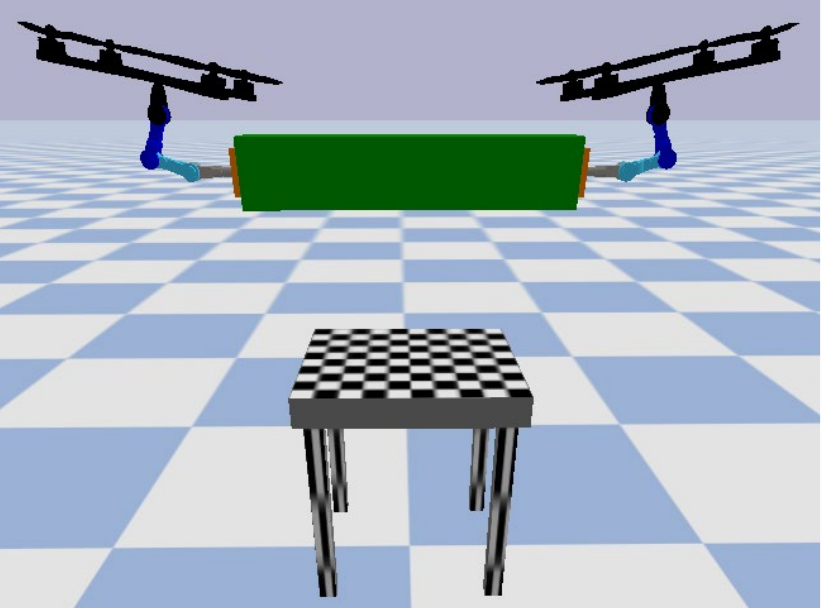}} 
    	\caption{This work aims at accomplishing collaborative grasping using multiple AMs with a decentralized control scheme. The passivity-based impedance control makes AMs converge to certain configurations at which they can balance each other.}
	\label{fig:collaborative_grasping}
\end{figure}

In this paper, to enable a decentralized control scheme, we use the passivity theory that leads to modularity due to the well-known fact that passivity is preserved under a feedback interconnection of passive systems \cite{zelazo2015decentralized, kim2019passivity, kim2021passive}. To this end, we employ a coordinate transformation \cite{garofalo2018task} that results in an inertially decoupled AM dynamics to which we can apply a passive impedance control method. As the passivity property provides modularity, the proposed method is scalable in the sense that many AMs can collaboratively grasp an object in a decentralized manner. More specifically, the states of other AMs do not appear in the control law.

Technical contributions of this paper can be summarized as follows. At the pre-grasping phase (i.e. at the free-flight phase), uniform asymptotic stability is achieved for each AM. Under the interaction, we show that the input-output (I/O) pair of an AM is output strictly passive. Consequently, collaborative grasping is accomplished stably by successively applying feedback interconnections that preserve passivity. Using the proposed control strategy, each AM converges to a certain configuration at which multiple AMs balance each other, achieving a stable hovering (see Fig. \ref{fig:collaborative_grasping}).

This paper is organized as follows. Section \ref{sec:system_modeling} presents an inertially decoupled AM dynamics. In Section \ref{sec:control_framework}, a decentralized passive impedance control scheme is proposed and convergence under collaborative grasping is shown through passivity analysis. Section \ref{sec:simulation} validates the proposed control scheme via simulations and Section \ref{sec:conclusion} concludes the paper.

\section{System Modeling}
\label{sec:system_modeling}

\subsection{Dynamic equations of aerial manipulator}
Let us consider an under-actuated aerial vehicle equipped with an $n$ DOF non-redundant robotic manipulator as shown in Fig. \ref{fig:am_modeling}. $\{w\}$, $\{b\}$, and $\{e\}$ represent the world frame, the floating base frame, and the end-effector frame. $\{b\}$ is located at the center of mass (CoM) of the aerial vehicle. 
\begin{figure} [!t]
    \centering
	{\includegraphics[scale=0.7]{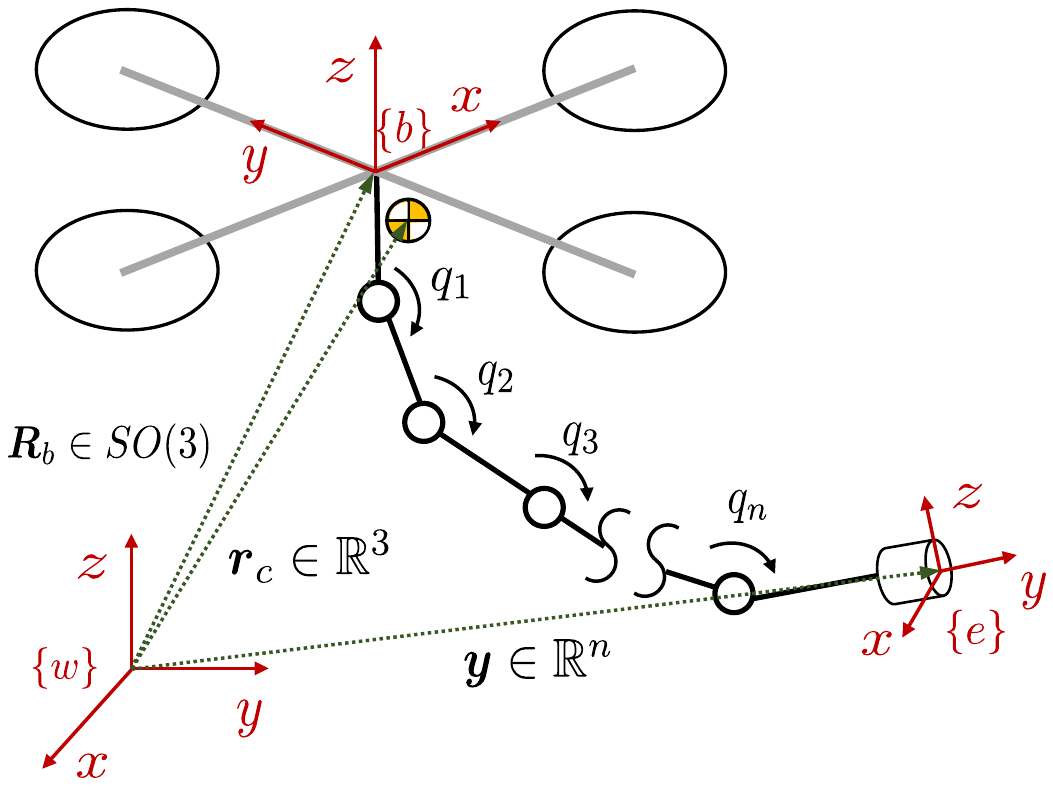}} 
    	\caption{An under-actuated aerial vehicle equipped with an $n$ DOF non-redundant robotic manipulator. }
	\label{fig:am_modeling}
\end{figure}
Assume that all external forces are applied only at the end-effector by the environmental interaction. Then, the dynamic equations of the system can be written as
\begin{align}
\label{eq:dynamics}
\bM(\bq)\dot{\bV}+\bC(\bq, \bV)\bV+\bg(\bR_b,\bq)=\btau+\bJ_e^T\bm{F}_e,
\end{align}
with $\bV=[\bv_b^T \;\bw_b^T \; \dot{\bq}^T]^T\in\mathbb{R}^{n+6}$. $\bv_b$ and $\bw_b$ are the body linear and angular velocities of the floating base. ${\bq\in\mathbb{R}^n}$ is the joint configuration of the manipulator, and ${\bR_b\in SO(3)}$ represents the floating base orientation. $\bM, \bC$, and $\bg$ are the inertia matrix, Coriolis/centrifugal matrix, and gravity vector, respectively.

${\btau=[0\;0\;f\;\btau_b^T \;\btau_q^T]^T\in\mathbb{R}^{n+6}}$ is the generalized force, where $f$ is the total thrust of the under-actuated aerial vehicle, ${\btau_b\in\mathbb{R}^3}$ is the torque of the floating base, and $\btau_q\in\mathbb{R}^n$ is the joint torque of the manipulator. Note that the system is under-actuated because the thrust can only be generated along the body $z$-axis. Next, ${\bm{F}_e=[\bm{f}_e^T\; \btau_e^T]^T\in\mathbb{R}^6}$ is the end-effector's body wrench, where $\bm{f}_e$ and $\btau_e$ represent force and moment, respectively. $\bJ_e\in\mathbb{R}^{6\times (n+6)}$ is the end-effector's body Jacobian that maps $\bV$ to the end-effector's body twist; i.e., $\bV_e=[\bv_e^T \;\bw_e^T]^T=\bJ_e\bV$. Here, $\bv_e$ and $\bw_e$ are the body linear and angular velocities of $\{e\}$.

\subsection{Inertially decoupling coordinate transformation}

Motivated by \cite{garofalo2018task}, we employ the following coordinate transformation which maps $\bV$ to a new coordinate ${\bxi\in\mathbb{R}^{n+6}}$.
\begin{align}
\label{eq:new_coord}
\bm{\bxi}\triangleq&\left[
\begin{array}{c}
\dot{\br}_{c}\\
\bw_b\\
\bm{\rho}
\end{array}
\right]
=\left[
\begin{array}{c}
\bJ_{c}\\
\bJ_{w_b}\\
\bm{N}
\end{array}
\right]\bV\\
=&\underbrace{\left[
\begin{array}{ccc}
\bR_b &-\hat{\br}_{bc}\bR_b &\partial \br_{bc}/{\partial \bq}\\
\bzero & \bI & \bzero\\
\bzero &\bm{N}_1 &  \bI
\end{array}
\right]}_{=\bm{T}}\bV,
\label{eq:simplified_N}
\end{align}
where $\br_{c}$ is the CoM of the aerial manipulator represented in $\{w\}$ (see Fig. \ref{fig:am_modeling}). ${\br}_{bc}$ is the vector from $\{b\}$ to the CoM of the aerial manipulator, expressed in $\{w\}$. The hat operator maps $\mathbb{R}^3$ to $\mathfrak{so}(3)$, i.e.,
\begin{align}
\label{eq:hat}
\widehat{\bm{r}}=\left[
\begin{array}{ccc}
0 & -r_z & r_y \\ 
r_z & 0 & -r_x \\ 
-r_y & r_x & 0 
\end{array}
\right] \in \mathfrak{so}(3),   
\end{align}
where $\bm{r}=[r_x, \; r_y, \; r_z]^T\in\mathbb{R}^3$. $\bm{\rho}\in\mathbb{R}^n$ is the nullspace velocity and $\bm{N}$ is a dynamically consistent nullspace projector defined by ${\bm{N}=\bm{(ZMZ^T)^{-1}ZM}}$ \cite{ott2008cartesian}. Here, ${\bm{Z}=[-(\partial \br_{bc}/{\partial \bq})^T\bR_b\;\;\bzero_{n\times3}\;\bI]}\in\mathbb{R}^{n\times(n+6)}$ is a nullspace base matrix, satisfying $\bm{Z}[\bJ_{c}^T\; \bJ_{w_b}^T]=\bzero$.

Note that the explicit computation of $\bm{N}$ results in ${\bm{N}=[\bzero_{n\times3} \; \bm{N}_1 \; \bI]}$ (see (\ref{eq:simplified_N})) using ${{\bM_{mt}}=m(\partial \br_{bc}/{\partial \bq})^T\bR_b\in\mathbb{R}^{n\times3}}$, where ${\bM_{mt}}$ is the inertia coupling matrix between the manipulator and the floating base translation, and $m$ is the total mass. Moreover, $\bT^{-T}$ has the following structure. This can be easily validated by calculating it by hand.
\begin{align}
\label{eq:T_structure}
\bT^{-T}|_{\text{first\; $3 \times (n+6)$}}=[\bR_b \; \bzero_{3\times3}\;\bzero_{3\times n}].
\end{align}

Employing the coordinate transformation $\bm{T}$, interestingly, the transformed inertia matrix $(\bm{TM}^{-1}\bm{T}^T)^{-1}$ becomes block-diagonal. Using $\bxi=\bm{T}\bV$ in (\ref{eq:new_coord}), the dynamic equations (\ref{eq:dynamics}) can be rewritten in the coordinate $\bxi$ as follows: 
\begin{align}
\label{eq:new_dynamics}
&\underbrace{
\left[
\begin{array}{ccc}
 m\bI & \bzero &\bzero\\
 \bzero & \bLambda_{w_b} & \bzero\\
 \bzero &\bzero&\bLambda_{\rho}
 \end{array}
\right]}_{=\bLambda_\xi}\dot{\bxi}+
\underbrace{
\left[
\begin{array}{ccc}
 \bzero & \bzero &\bzero\\
 \bzero & \bGamma_{w_b} & \bGamma_{w_b, \rho}\\
 \bzero &-\bGamma_{w_b, \rho}^T & \bGamma_{\rho}
 \end{array}
\right]}_{=\bGamma_\xi}
\bxi\nonumber\\
& +\underbrace{
\left[
\begin{array}{c}
 mg\bm{e}_3\\
 \bzero\\
 \bzero
 \end{array}
 \right]}_{=\bzeta_\xi}=\underbrace{\left[
\begin{array}{c}
u_1\bR_b  \bm{e}_3\\
\bm{u}_2\\
\bm{u}_3\\
 \end{array}
 \right]}_{=\bm{u}}+
\underbrace{\left[
\begin{array}{c}
\bm{F}_{r_c}\\
\bm{F}_{w_b}\\
\bm{F}_{\rho}\\
 \end{array}
 \right],}_{=\bm{F}_\xi}
\end{align}
with the transformed inertia matrix $\bm{\Lambda}_\xi$, Coriolis/centrifugal matrix $\bGamma_\xi$, gravity vector $\bzeta_\xi$, the gravitational acceleration $g$, and $\bm{e}_3=[0\; 0\; 1]^T$. 

$\bu$ and $\bm{F}_\xi$ are the transformed generalized force and external forces given by ${\bu=\bT^{-T}\btau}$ and ${\bm{F}_\xi=\bT^{-T}\bJ_e^T\bm{F}_e}$. By calculating ${\bu=\bT^{-T}\btau}$, in fact, one can easily check that $u_1=f$. Note also that the explicit computation of the first three elements of $\bm{F}_\xi=\bT^{-T}\bJ_e^T\bm{F}_e$ leads to ${\bm{F}_{r_c}=\bR_e\bm{f}_e}$, where $\bR_e$ is the orientation of $\{e\}$ relative to $\{w\}$. 

The first, second, and third rows of (\ref{eq:new_dynamics}) correspond to the CoM dynamics, the floating base orientation dynamics, and the nullspace dynamics, respectively. We remark that the first and the second rows of (\ref{eq:new_dynamics}) have a similar structure to the standard under-actuated multi-rotor dynamics.

\section{Control Framework}
\label{sec:control_framework}

\subsection{Control strategy}

This paper aims at grasping an object using an arbitrary number of AMs which are under-actuated. Since a dynamic equation of one AM is given by (\ref{eq:new_dynamics}), that of multiple AMs coupled via $\bm{F}_e$ becomes very complicated to handle. To overcome this, we propose a decentralized control scheme in which a passive impedance control is applied to each AM. Since a feedback interconnection of passive systems preserves passivity \cite{khalil2002nonlinear}, the proposed control scheme achieves modularity that ensures stable collaborative grasping with any number of AMs.

Recall that the first and second rows of (\ref{eq:new_dynamics}) have a similar structure to the standard multi-rotor dynamics. Thus, we utilize a geometric tracking control, which is a well-established approach for multi-rotors \cite{lee2010geometric}, with little modification. On the third row of (\ref{eq:new_dynamics}), we apply a technique similar to the output feedback linearization, to realize a passive impedance behavior on the manipulator's end-effector.

From the theoretical point of view, we show (i) uniform asymptotic stability for free-flight, (ii) passivity property of a single AM, and (iii) convergence under collaborative grasping. The first one is relevant to the pre-grasping phase in which AMs fly independently, and the others come into play when the grasping occurs.

\label{sec:control_goal}

\subsection{Control design}

\subsubsection{CoM control}

Consider the control law given by 
\begin{align}
\label{eq:Com_controller}
u_1=(\underbrace{m\ddot{\br}_{c, d}+mg\bm{e}_3-\bm{K}_t\tilde{\br}_{c}-\bm{D}_t\dot{\tilde{\br}}_{c}-\bm{F}_{r_c}}_{=\bm{f}_d})\cdot(\bR_b\bm{e}_3),
\end{align}
with positive definite gain matrices $\bm{K}_t$ and $\bm{D}_t$. ${\tilde{\br}_{c}=\br_{c}-\br_{c, d}}$ is the CoM position error where  $\br_{c, d}$ is the desired trajectory of the CoM. $\bm{f}_d$ is the PD control with an acceleration feed-forward term and the external force compensation including the gravity. By compensating for the external force, the CoM dynamics is not affected by the environmental interaction.

$u_1$ is designed as the projection of $\bm{f}_d$ onto the floating base z-axis. Therefore, to stabilize the CoM dynamics, $\bm{f}_d$ has to be directed toward the body z-axis of the floating base as much as possible. To this end, the desired floating base rotation can be defined as follows\cite{lee2010geometric}:
\begin{align}
\label{eq:Rd}
\bR_{b, d}=[\bm{b}_{2d}\times\bm{b}_{3d}\;\; \bm{b}_{2d}\;\; \bm{b}_{3d}],
\end{align}
where $\bm{b}_{3d}=\bm{f}_d/\|\bm{f}_d\|$ and $\bm{b}_{2d}=(\bm{b}_{3d}\times\bm{b}_{1d})/\|\bm{b}_{3d}\times\bm{b}_{1d}\|$ with the desired heading direction $\bm{b}_{1d}$ of the floating base. Assume that $\|\bm{f}_d\| \neq 0$.
\label{sec:com_control}
\subsubsection{Floating base orientation control}
To achieve ${\bR_b\rightarrow\bR_{b, d}}$ (\ref{eq:Rd}), $\bu_2$ is designed as follows:
\begin{align} 
\label{eq:ori_controller}
\bu_2=\bLambda_{w_b}[\dot{\bw}_{b, d}-&\bLambda^{-1}_{w_b, d}(k_R\bm{e}_R+k_w\bm{e}_w)]\nonumber\\&+\bGamma_{w_b}\bw_b+\bGamma_{w_b, \rho}\bm{\rho}-\bm{F}_{w_b},
\end{align}
where $\bLambda_{w_b, d}>\bzero$ is the desired inertia, and $k_w, k_R$ are positive scalar gains that respectively represent the desired damping and stiffness values. The orientation error $\bm{e}_R$ and the angular velocity error $\bm{e}_w$ are defined by
\begin{align}
\label{eq:ori_error}
\bm{e}_R=\frac{1}{2}({\bR^T_{b, d}}\bR_b-\bR_b^T\bR_{b, d})^\vee,\\
\label{eq:ori_vel_error}
\bm{e}_w=\bw_b-\underbrace{\bR_b^T\bR_{b, d}({\bR^T_{b, d}}\dot{\bR}_{b,d})^\vee}_{=\bw_{b,d}},
\end{align}
where the vee operator ${(\cdot)^\vee: \mathfrak{so}(3) \mapsto \mathbb{R}^3}$ is the inverse of the hat operator.

This control law leads to the following mass-spring-damper closed-loop system.
\begin{align}
\label{eq:closed_loop}
\bLambda_{w_b, d}\dot{\bm{e}_w}+k_w\bm{e}_w+k_R\bm{e}_R=\bzero.
\end{align}
\label{sec:ori_control}

\subsubsection{End-effector impedance control}
Let us define the task variable $\by\in\mathbb{R}^n$, which represents the end-effector configuration in the task space with respect to $\{w\}$; see Fig. \ref{fig:am_modeling}. The goal is to realize a mass-spring-damper impedance behavior with the variable $\tilde{\by}=\by-\by_d$, where $\by_d$ represents the desired trajectory of $\by$.

To begin with, notice that there exists a Jacobian matrix ${\bJ}$ that relates $\bxi$ to ${\dot{\by}}$:
\begin{align}
\label{eq:jacobian_xi}
\dot{\by}=\bm{J}\bxi,
\end{align}
where ${\bJ\in\mathbb{R}^{n\times(n+6)}}$. Using (\ref{eq:new_dynamics}), $\ddot{\tilde{\by}}$ can be represented as
\begin{align}
\label{eq:tilde_y_ddot}
\ddot{\tilde{\by}}=\ddot{{\by}}-\ddot{\by}_d=\bJ\underbrace{\bLambda_\xi^{-1}(\bu+\bm{F}_\xi-\bGamma_{\xi} \bxi-\bzeta_\xi)}_{=\dot{\bxi}}+\dot{\bJ}\bxi-\ddot{\by}_d.
\end{align}
Pre-multiplying both sides of (\ref{eq:tilde_y_ddot}) by ${\bLambda_y=(\bJ\bLambda_\xi^{-1}\bJ^T)^{-1}}$, we obtain

\begin{align}
\label{eq:m_tilde_y_ddot_}
\bLambda_y\ddot{\tilde{\by}}=\bm{F}_y+\underbrace{(\bJ^{\Lambda+})^T(\bu-\bGamma_{\xi} \bxi-\bzeta_\xi)+\bLambda_y(\dot{\bJ}\bxi-\ddot{\by}_d)}_{=\bar{\bu}},
\end{align}
where ${\bJ^{\Lambda+}\triangleq \bLambda_\xi^{-1}\bJ^T(\bJ\bLambda_\xi^{-1}\bJ^T)^{-1}}$ is a right inverse of $\bJ$; i.e., $\bJ\bJ^{\Lambda+}=\bI_{n\times n}$.  ${\bm{F}_y=(\bJ^{\Lambda+})^T\bm{F}_\xi}$ is a generalized force in $\dot{\tilde{y}}$-coordinate.

The control goal is to realize the following impedance behavior using the control input $\bu_3$. 
\begin{align}
\label{eq:desired_task_behavior}
\bLambda_{y}\ddot{\tilde{\by}}+\bm{D}_y\dot{\tilde{\by}}+\bm{K}_y\tilde{\by}=\bm{F}_{y},
\end{align}
where ${\bm{K}_y, \bm{D}_y >\bzero}$ represent the stiffness, and damping gains, respectively. In other words, we need to design $\bu_3$ in such a way that ${\bar{\bu}=-(\bm{D}_y\dot{\tilde{\by}}+\bm{K}_y\tilde{\by})}$. This can be done by solving
\begin{align}
\label{eq:u_bar}
\left[
\begin{array}{ccc}
\bar{\bJ}_{1} & \bar{\bJ}_{2} & \bar{\bJ}_{3}
 \end{array}
\right]\left[
\begin{array}{c}
f\bR_b\bm{e}_3-mg\bm{e}_3\nonumber\\
\bu_2-\bGamma_{w_b}\bw_b-\bGamma_{w_b, \rho}\bm{\rho}\\
\bu_3+\bGamma_{w_b, \rho}^T\bw_b-\bGamma_{\rho}\bm{\rho}
\end{array}
\right]\nonumber\\
=-(\bLambda_y(\dot{\bJ}\bxi-\ddot{\by}_d)+\bm{D}_y\dot{\tilde{\by}}+\bm{K}_y\tilde{\by}),
\end{align}
where $(\bJ^{\Lambda+})^T=[\bar{\bJ}_{1}\; \bar{\bJ}_{2}\; \bar{\bJ}_{3}]\in\mathbb{R}^{n\times(n+6)}$. Assuming that $\bar{\bJ}_{3}$ is invertible,\footnote{With the aid of symbolic calculation, we confirmed that $\bar{\bJ}_{3}$ is almost always invertible unless the manipulator is in a singular configuration.} 
the resulting $\bu_3$ is
\begin{align}
\label{eq:u3}
&\bu_3=-\bar{\bJ}_{3}^{-1}(
\left[
\begin{array}{cc}
\bar{\bJ}_{1} & \bar{\bJ}_{2}
 \end{array}
\right]\left[
\begin{array}{c}
f\bR_b\bm{e}_3-mg\bm{e}_3\nonumber\\
\bu_2-\bGamma_{w_b}\bw_b-\bGamma_{w_b, \rho}\bm{\rho}
\end{array}
\right]
\\&+\bLambda_y(\dot{\bJ}\bxi-\ddot{\by}_d)+\bm{D}_y\dot{\tilde{\by}}+\bm{K}_y\tilde{\by})+\bGamma_{\rho}\bm{\rho}-\bGamma_{w_b, \rho}^T\bw_b.
\end{align}
The control inputs $u_1$, $\bu_2$, and $\bu_3$ can be transformed into ${\btau=[0\;0\;f\;\btau_b^T \;\btau_q^T]^T\in\mathbb{R}^{n+6}}$ using $\btau=\bT^T\bu$.

\subsection{Controller analysis}

This section presents theoretical justifications for the proposed controller. After presenting the stability in free-flight, stable collaboration will be followed based on passivity analysis. Throughout this section, states $\bm{x}$ are omitted in Lyapunov and storage functions for the sake of simplicity; e.g., $V(\bm{x})$ will be expressed as $V$. The following theorem shows uniform asymptotic stability, which is relevant to the free-flight (i.e. pre-grasping) phase.
\begin{thm}[Uniform asymptotic stability of free-flight]
	\label{thm:stability}
	Assume that $\bm{F}_e=\bzero$ and $\|m\ddot{\br}_{c, d}+mg\bm{e}_3\|<B_1$. With the control law (\ref{eq:Com_controller}), (\ref{eq:ori_controller}), and (\ref{eq:u3}), an AM system has a uniformly asymptotically stable equilibrium point $\tilde{\br}_c=\bzero, \tilde{\bm{e}}_R=\bzero$, and $\tilde{\by}=\bzero$ with zero derivatives.
\end{thm}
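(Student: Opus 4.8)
The plan is to set $\bm{F}_e=\bzero$, so that every external-force term in (\ref{eq:new_dynamics}) vanishes; in particular $\bm{F}_{r_c}=\bzero$ and $\bm{F}_y=(\bJ^{\Lambda+})^T\bm{F}_\xi=\bzero$. Because $\bLambda_\xi$ is block-diagonal, the closed loop then splits into three error subsystems: the orientation error dynamics (\ref{eq:closed_loop}), the task-space error dynamics (\ref{eq:desired_task_behavior}) with $\bm{F}_y=\bzero$, and the CoM error dynamics obtained by inserting (\ref{eq:Com_controller}) into the first row of (\ref{eq:new_dynamics}). The task-space subsystem is self-contained once $\bm{F}_y=\bzero$; the orientation and CoM subsystems, however, are bidirectionally coupled through the geometric-control projection, and that coupling is the crux of the argument.

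I would first dispatch the two subsystems that are (nearly) autonomous. For the orientation loop I would take the configuration error function $\Psi=\tfrac12\,\mathrm{tr}(\bI-\bR_{b,d}^T\bR_b)$, whose left-trivialized gradient is $\bm{e}_R$ and which satisfies $\dot\Psi=\bm{e}_R^T\bm{e}_w$, and form $V_R=\tfrac12\,\bm{e}_w^T\bLambda_{w_b,d}\bm{e}_w+k_R\Psi$. Using (\ref{eq:closed_loop}) this gives $\dot V_R=-k_w\norm{\bm{e}_w}^2\le0$, so that a LaSalle/standard geometric-control estimate yields local uniform asymptotic stability of $(\bm{e}_R,\bm{e}_w)=(\bzero,\bzero)$ on the sublevel set $\Psi<2$, hence $\bR_b\to\bR_{b,d}$. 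For the task-space loop, with $\bm{F}_y=\bzero$, (\ref{eq:desired_task_behavior}) is a stable second-order system; with $V_y=\tfrac12\,\dot{\tilde{\by}}^T\bLambda_y\dot{\tilde{\by}}+\tfrac12\,\tilde{\by}^T\bm{K}_y\tilde{\by}$ one finds $\dot V_y=-\dot{\tilde{\by}}^T\bm{D}_y\dot{\tilde{\by}}+\tfrac12\,\dot{\tilde{\by}}^T\dot{\bLambda}_y\dot{\tilde{\by}}$, where the $\dot{\bLambda}_y$ term is cubic in the state and is dominated by the damping near the equilibrium; this gives local uniform asymptotic stability of $(\tilde{\by},\dot{\tilde{\by}})=(\bzero,\bzero)$, the positive-definite bounds on $\bLambda_y,\bm{D}_y,\bm{K}_y$ along the (bounded) reference furnishing uniformity.

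The main effort is the CoM loop. Substituting (\ref{eq:Com_controller}) into the first row of (\ref{eq:new_dynamics}) gives $m\ddot{\tilde{\br}}_c+\bm{D}_t\dot{\tilde{\br}}_c+\bm{K}_t\tilde{\br}_c=(\bm{f}_d\cdot\bR_b\bm{e}_3)\,\bR_b\bm{e}_3-\bm{f}_d$, whose right-hand side equals minus the component of $\bm{f}_d$ orthogonal to $\bR_b\bm{e}_3$ and hence vanishes exactly when $\bR_b\bm{e}_3=\bm{b}_{3d}=\bm{f}_d/\norm{\bm{f}_d}$, i.e. under perfect attitude. I would bound this residual by a continuous function of the attitude error that is zero at $\bm{e}_R=\bzero$, using $\norm{\bm{f}_d}\neq0$ and $\norm{m\ddot{\br}_{c,d}+mg\bm{e}_3}<B_1$ to keep $\bm{f}_d$, and thus the bound, uniformly finite. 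Since the attitude subsystem is uniformly asymptotically stable and drives the CoM subsystem through a perturbation that vanishes with $\bm{e}_R$, I would conclude via a cascade (converging-input/ISS) argument, or equivalently a composite Lyapunov function $V=V_{r_c}+V_R$ with a cross term and sufficiently large gains in the spirit of \cite{lee2010geometric}, that $(\tilde{\br}_c,\dot{\tilde{\br}}_c)\to(\bzero,\bzero)$ uniformly asymptotically.

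Combining the three results gives uniform asymptotic stability of the full equilibrium $\tilde{\br}_c=\bzero$, $\bm{e}_R=\bzero$, $\tilde{\by}=\bzero$ with zero derivatives. I expect the principal obstacle to be exactly the CoM--orientation interconnection intrinsic to under-actuated geometric control: $\bR_{b,d}$ is built from $\bm{f}_d$, which itself contains the CoM error, while the CoM dynamics is actuated only along $\bR_b\bm{e}_3$. Making the cascade rigorous---quantifying the residual's dependence on $\bm{e}_R$, securing a common region of attraction ($\Psi<2$, $\norm{\bm{f}_d}\neq0$), and ensuring the estimates are time-uniform under the time-varying references---is where the boundedness hypotheses and the gain conditions must be used carefully.
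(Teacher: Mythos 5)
Your decomposition is sound and reaches the theorem, but it is a genuinely different route from the paper's. The paper does not separate the CoM and attitude loops at all: it takes the single composite Lyapunov function $V = V_1 + \tfrac12\dot{\tilde{\by}}^T\bLambda_y\dot{\tilde{\by}} + \tfrac12\tilde{\by}^T\bm{K}_y\tilde{\by}$, where $V_1$ is exactly the cross-term function of \cite{lee2010geometric} (including the terms $c_1\tilde{\br}_c^T\dot{\tilde{\br}}_c$ and $c_2\bm{e}_R^T\bm{e}_w$), and imports from that reference the bounds $W_1\le V_1\le W_2$ and $\dot{V}_1\le -W_3$ --- which is precisely where the CoM--attitude interconnection you call the ``main effort'' is disposed of wholesale by citation. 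This yields $\dot{V}\le -W_3 - \dot{\tilde{\by}}^T\bm{D}_y\dot{\tilde{\by}}$, which contains no $-\|\tilde{\by}\|^2$ term, so the paper finishes with Barbalat's lemma ($\dot{V}\to 0$, hence all errors except $\tilde{\by}$ vanish) plus a contradiction step via (\ref{eq:desired_task_behavior}): if $\tilde{\by}\to\tilde{\by}^*\neq\bzero$ then $\ddot{\tilde{\by}}$ converges to a nonzero value, contradicting $\dot{\tilde{\by}}\to\bzero$. Your cascade/ISS treatment makes the attitude-to-CoM coupling explicit and could quantify a region of attraction, which the citation-based route hides; what the paper's monolithic $V$ buys instead is reuse: the very same $V$ serves as the storage function $S_{AM}$ in Theorem \ref{thm:passivity}, and the single dissipation inequality becomes the output-strict-passivity inequality (\ref{eq:storage_diff_single_am}) once $\bm{F}_y\neq\bzero$. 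A cascade argument does not hand you that storage function.

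Two points in your plan need repair. First, your task-space step gives $\dot{V}_y \le -\dot{\tilde{\by}}^T\bigl(\bm{D}_y - \tfrac12\dot{\bLambda}_y\bigr)\dot{\tilde{\by}}$, which is only negative semidefinite in $(\tilde{\by},\dot{\tilde{\by}})$; asserting uniform asymptotic stability directly is unjustified (LaSalle does not apply to this time-varying loop), and you need the same Barbalat-plus-contradiction step the paper uses, or a cross term such as $\epsilon\tilde{\by}^T\bLambda_y\dot{\tilde{\by}}$. Second, ``dominated by the damping near the equilibrium'' is not automatic: on the equilibrium trajectory the configuration still moves with the reference, so $\dot{\bLambda}_y$ need not vanish there, and you require the quantitative gain condition $\lambda_{m}(\bm{D}_y)>\tfrac12\sup\|\dot{\bLambda}_y\|$ over the region considered. (The paper silently drops the $\dot{\bLambda}_y$ term altogether, so flagging it is, if anything, more careful than the original.) Finally, note that in your CoM cascade the residual scales with $\|\bm{f}_d\|$ and hence grows linearly in the CoM error itself, so a naive vanishing-perturbation argument fails outside a local set; you correctly identified this as the crux, and it is exactly what invoking the $V_1$ of \cite{lee2010geometric} settles in the paper.
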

\begin{proof}
    The candidate Lyapunov function is given by
	\begin{align}
    \label{eq:lyapunov}
    V&=V_1+\frac{1}{2}\dot{\tilde{\by}}^T\bLambda_y\dot{\tilde{\by}}+\frac{1}{2}\tilde{\by}^T\bm{K}_y\tilde{\by},\\
    V_1&=\frac{1}{2}m\|\dot{\tilde{\br}}_c\|^2+\frac{1}{2}\tilde{\br}_c^T\bm{K}_t\tilde{\br}_c+c_1\tilde{\br}_c^T\dot{\tilde{\br}}_c\nonumber\\&+\frac{1}{2}\bm{e}_w^T\bLambda_{w_b,d}\bm{e}_w+\frac{1}{2}K_rtr(\bI-\bR_{b,d}^T\bR_b)+c_2\bm{e}_R^T\bm{e}_w.
    \label{eq:lyapunov_aerial_vehicle}
    \end{align}
    In fact, \cite{lee2010geometric} already proved that there exist positive definite functions $W_1, W_2$, and $W_3$ such that ${W_1\leq V_1\leq W_2}$ and ${\dot{V}_1\leq-W_3}$. In addition, since all eigenvalues of $\bLambda_y$ and $\bm{K}_y$ are positive, there exist positive definite functions $\bar{W}_1$, $\bar{W}_2$ such that ${\bar{W}_1\leq V\leq \bar{W}_2}$. Therefore, using (\ref{eq:desired_task_behavior}), the time derivative of $V$ satisfies ${\dot{V}\leq -W_3-\dot{\tilde{\by}}^T\bm{D}_y\dot{\tilde{\by}}}$, meaning that $\dot{V}$ is negative semi-definite. Hence, the uniform stability is achieved and all states are bounded. 
    
    Note that $V$ has a finite limit because $\dot{V}\leq0$ and $V$ is lower bounded. Moreover, as $\ddot{V}$ exists and is bounded, $\dot{V}$ is uniformly continuous in time. By applying Barbalat's lemma \cite{slotine1991applied}, ${\dot{V}\rightarrow 0 \Rightarrow (\tilde{\br}_c, \dot{\tilde{\br}}_c, \bm{e}_R, \bm{e}_w, \dot{\tilde{\by}})\rightarrow\bzero}$, resulting in ${\tilde{\by}\rightarrow\tilde{\by}^*}$ due to the convergence of $V$. To show that ${\tilde{\by}^*=\bzero}$ by contradiction, suppose that ${\tilde{\by}^*\neq\bzero}$. This leads to ${\ddot{\tilde{\by}}\rightarrow\ddot{\tilde{\by}}^*\neq\bzero}$ 
   (see (\ref{eq:desired_task_behavior})), which is contradiction to $\dot{\tilde{\by}}\rightarrow\bzero$. As a result, ${\tilde{\br}_c=\tilde{\bm{e}}_R=\tilde{\by}=\bzero}$ with zero derivatives is a uniformly asymptotically stable equilibrium point.
\end{proof}

The following theorem states the passivity property of a single AM, which will be applied to the multi-AM scenario.

\begin{thm}[Passivity under interaction]
	\label{thm:passivity}
	Assume that ${\|m\ddot{\br}_{c, d}+mg\bm{e}_3-\bm{F}_{r_c}\|<B_2}$, and let $\dot{\by}_d=0$.  
    With the proposed controller (\ref{eq:Com_controller}), (\ref{eq:ori_controller}), and (\ref{eq:u3}), there exists a storage function $S_{AM}\geq0$ such that
	\begin{align}
	    \dot{S}_{AM} \leq -\lambda_{m}(\bm{D}_y)||\dot{\by}||^2 + \dot{\by}^T\bm{F}_y,
	    \label{eq:storage_diff_single_am}
	\end{align}
	where $\lambda_{m}(\cdot)$ denotes the minimum eigenvalue of $(\cdot)$. (\ref{eq:storage_diff_single_am}) means that the input-output (I/O) pair $(\bm{F}_y, \dot{\by})$ of an AM is output strictly passive. 
\end{thm}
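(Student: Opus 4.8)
The plan is to reuse the Lyapunov function of Theorem \ref{thm:stability} as the storage function, namely $S_{AM}=V_1+\frac{1}{2}\dot{\tilde{\by}}^T\bLambda_y\dot{\tilde{\by}}+\frac{1}{2}\tilde{\by}^T\bm{K}_y\tilde{\by}$ with $V_1$ as in (\ref{eq:lyapunov_aerial_vehicle}); this is exactly the $V$ in (\ref{eq:lyapunov}). Since $V_1$ is lower bounded by a positive definite function and the two impedance terms are nonnegative, $S_{AM}\ge 0$, as required. The essential difference from the free-flight analysis is that now $\bm{F}_e\neq\bzero$, so I must track where the external wrench enters the closed loop and how it enters the supply rate.

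First I would argue that the CoM and floating-base orientation subsystems are unaffected by the interaction. The CoM control (\ref{eq:Com_controller}) and the orientation control (\ref{eq:ori_controller}) both contain explicit cancellation terms ($-\bm{F}_{r_c}$ and $-\bm{F}_{w_b}$, respectively), so the first two rows of the closed loop coincide with those of the free-flight case; in particular the orientation error obeys the autonomous system (\ref{eq:closed_loop}). Consequently the estimate $\dot{V}_1\le -W_3\le 0$ used in Theorem \ref{thm:stability} (from \cite{lee2010geometric}) carries over verbatim, the only change being that the boundedness hypothesis is now $\|m\ddot{\br}_{c,d}+mg\bm{e}_3-\bm{F}_{r_c}\|<B_2$, which is exactly what keeps $\bm{f}_d\neq\bzero$ and hence $\bR_{b,d}$ in (\ref{eq:Rd}) well defined while the end-effector is loaded.

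Next I would differentiate the impedance part along the closed loop (\ref{eq:desired_task_behavior}). Writing $S_y=\frac{1}{2}\dot{\tilde{\by}}^T\bLambda_y\dot{\tilde{\by}}+\frac{1}{2}\tilde{\by}^T\bm{K}_y\tilde{\by}$ and substituting $\bLambda_y\ddot{\tilde{\by}}=\bm{F}_y-\bm{D}_y\dot{\tilde{\by}}-\bm{K}_y\tilde{\by}$ yields $\dot{S}_y=\dot{\tilde{\by}}^T\bm{F}_y-\dot{\tilde{\by}}^T\bm{D}_y\dot{\tilde{\by}}$, where the stiffness cross terms cancel by symmetry of $\bm{K}_y$ and the quadratic $\dot{\bLambda}_y$ term is removed by the Coriolis skew-symmetry inherited from the natural task-space inertia (the same cancellation already invoked implicitly in Theorem \ref{thm:stability}). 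The hypothesis $\dot{\by}_d=0$ is what lets me replace $\dot{\tilde{\by}}$ by the physical output $\dot{\by}$ throughout, so that the supply rate is expressed in the correct I/O variable.

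Finally I would add the two contributions, $\dot{S}_{AM}=\dot{V}_1+\dot{S}_y\le -W_3-\dot{\by}^T\bm{D}_y\dot{\by}+\dot{\by}^T\bm{F}_y\le-\lambda_{m}(\bm{D}_y)\|\dot{\by}\|^2+\dot{\by}^T\bm{F}_y$, using $W_3\ge0$ and $\dot{\by}^T\bm{D}_y\dot{\by}\ge\lambda_{m}(\bm{D}_y)\|\dot{\by}\|^2$, which is precisely (\ref{eq:storage_diff_single_am}) and establishes output strict passivity of $(\bm{F}_y,\dot{\by})$. I expect the main obstacle to be the second step: rigorously confirming that the force-compensation terms really do render the CoM/orientation closed loop identical to the free-flight one (so the geometric-control bound $\dot{V}_1\le-W_3$ applies unchanged under load), together with making the $\dot{\bLambda}_y$ cancellation explicit and pinning down the role of $B_2$ in keeping $\bm{f}_d$ away from zero during interaction.
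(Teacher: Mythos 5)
Your proposal is correct and follows essentially the same route as the paper: the paper likewise takes $S_{AM}=V$ from Theorem~\ref{thm:stability}, repeats that theorem's derivation under the new bound $B_2$ (noting only that $W_3$ changes due to $\bm{F}_{r_c}$ but remains positive definite), and reads off $\dot{S}_{AM}\leq -W_3-\dot{\by}^T\bm{D}_y\dot{\by}+\dot{\by}^T\bm{F}_y$, exactly as you do. One caveat on a detail you added: your appeal to Coriolis skew-symmetry to cancel the $\tfrac{1}{2}\dot{\tilde{\by}}^T\dot{\bLambda}_y\dot{\tilde{\by}}$ term does not apply as stated, because the closed loop (\ref{eq:desired_task_behavior}) has had its Coriolis term cancelled outright by the feedback-linearizing choice of $\bar{\bu}$ (so there is no $\bGamma_y\dot{\tilde{\by}}$ left to pair with $\dot{\bLambda}_y$) --- though the paper's own proof silently drops this term as well, so your account is no less rigorous than the source.
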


\begin{proof}
    Lyapunov function $V$ used in Theorem \ref{thm:stability} will be used as a storage function, i.e., $S_{AM}=V$. With the assumption ${\|m\ddot{\br}_{c, d}+mg\bm{e}_3-\bm{F}_{r_c}\|<B_2}$, the procedure of Theorem \ref{thm:stability} can be followed almost exactly the same, and consequently, the time derivative of $S_{AM}$ is ${\dot{S}_{AM}\leq -W_3-\dot{{\by}}^T \bm{D}_y \dot{{\by}}+\dot{\by}^T\bm{F}_y}$. Here, although $W_3$ is different from Theorem \ref{thm:stability} due to the existence of $\bm{F}_{r_c}$, it is still a positive definite function. Hence (\ref{eq:storage_diff_single_am}) can be concluded. 
\end{proof}

\begin{figure} [!t]
    \centering
	{\includegraphics[scale=0.7]{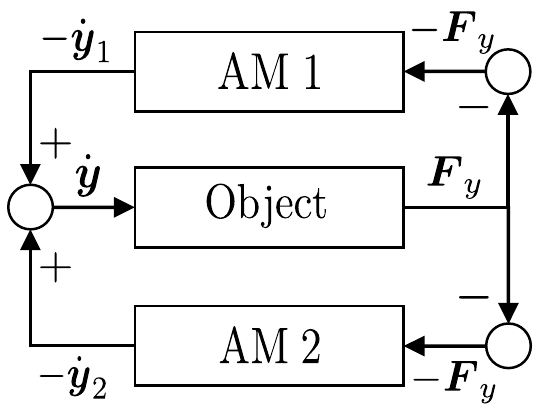}} 
    	\caption{Collaborative grasping using two AMs can be interpreted as a feedback interconnection of passive systems. }
	\label{fig:two_collaboration}
\end{figure}

\begin{figure*} [!t]
    \centering
	\subfigure[]
	{\includegraphics[scale=0.64 ]{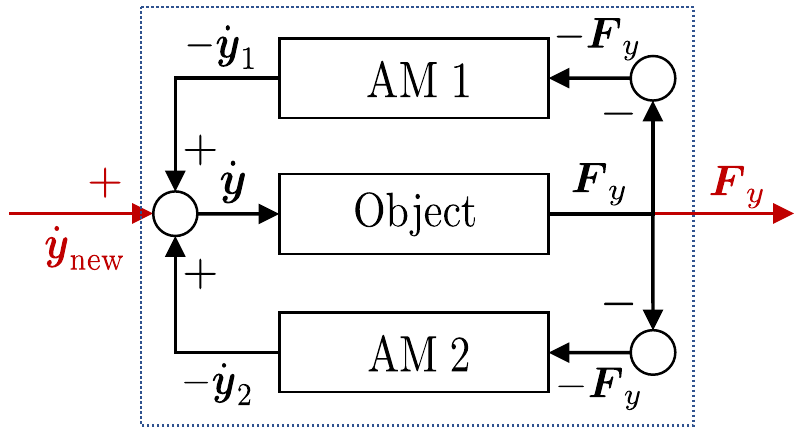}}
	\centering
	\subfigure[]
	{\includegraphics[scale=0.54]{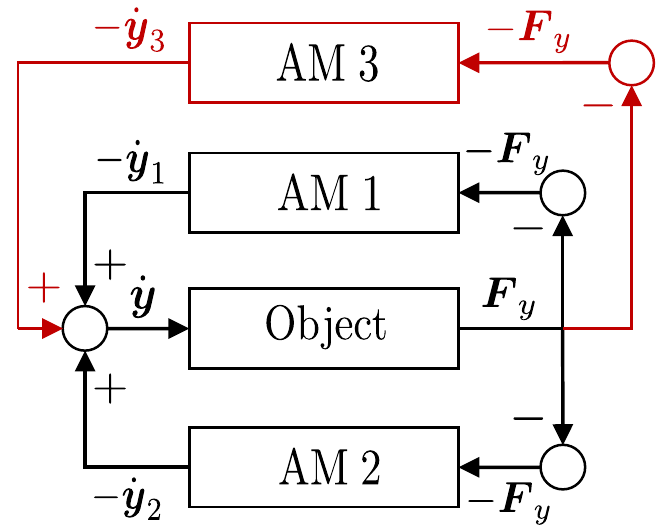}}
	\centering
	\subfigure[]
	{\includegraphics[scale=0.60]{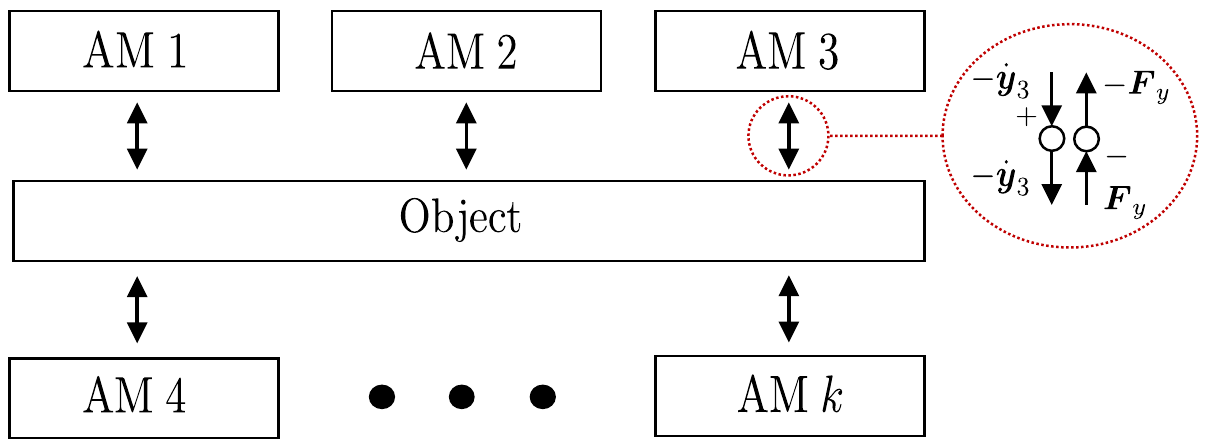}}
	\centering
	\caption{(a) The new I/O pair $(\dot{\by}_{new}, \bm{F}_y)$ of the entire system is passive. (b) The new I/O can be interconnected with `AM 3' by feedback, which preserves passivity. (c) Collaborative grasping with multiple AMs can be achieved by repeating the feedback interconnection. 
	}
	\label{fig:feedback_interconnection}
 	\vspace{-5mm}
\end{figure*}

To analyze collaborative grasping, consider an object whose dynamics is governed by $\bm{F}_y=\bm{M}\ddot{\by}$.\footnote{For simplicity, we omitted the gravity force which does not influence the passivity analysis. The gravity force is compensated by the friction caused by the internal force produced by impedance controllers of AMs.} Then passivity of an I/O pair ($\dot{\by}$, $\bm{F}_y$) is trivial by using ${S_{obj}=\frac{1}{2}\dot{\by}^T\bm{M}\dot{\by}}$ as a storage function, because 
\begin{align}
    \dot{S}_{obj}=\bm{F}_y^T \dot{\by}.
    \label{eq:S_obj_deriv}
\end{align}
When an object is grasped by two AMs, a block diagram can be represented by a feedback interconnection as shown in Fig. \ref{fig:two_collaboration}. An interpretation is straightforward: input to the object is the net velocity of AMs' end-effectors, while the output is the inertial force that acts as an external force to the AMs. The following theorem states that two AMs converge to a certain value.

\begin{thm}[Convergence under collaborative grasping]
	\label{thm:stability_grasping}
	Suppose that two AMs are grasping an object, which is modeled as a single mass, using the control law (\ref{eq:Com_controller}), (\ref{eq:ori_controller}), and (\ref{eq:u3}). Then the AMs converge to a certain configuration.
\end{thm}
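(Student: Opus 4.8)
The plan is to lift the single-AM passivity certificate of Theorem~\ref{thm:passivity} to the coupled system by summing storage functions and exploiting the fact that a feedback interconnection of passive subsystems is again passive, thereby obtaining a non-increasing total energy; convergence then follows from Barbalat's lemma exactly as in Theorem~\ref{thm:stability}.

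First I would take as a composite storage function the sum
\begin{align}
S = S_{AM,1} + S_{AM,2} + S_{obj},
\end{align}
where $S_{AM,i}$ is the storage function of AM~$i$ from Theorem~\ref{thm:passivity} and $S_{obj}=\tfrac{1}{2}\dot{\by}^T\bm{M}\dot{\by}$ is the object's kinetic energy appearing in (\ref{eq:S_obj_deriv}). Each summand is non-negative, so $S\ge 0$ is bounded below. Differentiating and using the full inequality established inside the proof of Theorem~\ref{thm:passivity}, namely $\dot{S}_{AM,i}\le -W_{3,i}-\dot{\by}_i^T\bm{D}_{y,i}\dot{\by}_i+\dot{\by}_i^T\bm{F}_{y,i}$ with $W_{3,i}$ positive definite in the CoM and orientation errors, together with $\dot{S}_{obj}=\bm{F}_y^T\dot{\by}$ from (\ref{eq:S_obj_deriv}), I obtain
\begin{align}
\dot{S}\le -\sum_{i=1}^{2}\!\left(W_{3,i}+\dot{\by}_i^T\bm{D}_{y,i}\dot{\by}_i\right)+\sum_{i=1}^{2}\dot{\by}_i^T\bm{F}_{y,i}+\bm{F}_y^T\dot{\by}.
\end{align}

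The decisive step is the cancellation of the three power-exchange terms. Substituting the interconnection relations of Fig.~\ref{fig:two_collaboration}---the object's input is the net end-effector velocity and its output wrench $\bm{F}_y$ re-enters the two AMs with the opposite sign---gives $\sum_i\dot{\by}_i^T\bm{F}_{y,i}+\bm{F}_y^T\dot{\by}=0$, which is precisely the statement that the negative feedback interconnection preserves passivity. What survives is $\dot{S}\le -\sum_i\!\big(W_{3,i}+\lambda_m(\bm{D}_{y,i})\norm{\dot{\by}_i}^2\big)\le 0$, so $S$ is non-increasing and, being bounded below, converges to a finite limit. Since $\ddot{S}$ is bounded, $\dot{S}$ is uniformly continuous, and Barbalat's lemma yields $\dot{S}\to 0$. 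Positive-definiteness of each $W_{3,i}$ and $\bm{D}_{y,i}$ then forces $\tilde{\br}_{c,i},\,\bm{e}_{R,i},\,\dot{\tilde{\br}}_{c,i},\,\bm{e}_{w,i}\to\bzero$ and $\dot{\by}_i\to\bzero$ for both AMs.

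It remains to promote the vanishing of the task velocities to convergence of the task configurations $\tilde{\by}_i$. Here I would return to the impedance relation (\ref{eq:desired_task_behavior}): with $\dot{\by}_i\to\bzero$ and the object accelerations tending to zero (so that $\bm{F}_y\to\bzero$ and hence $\bm{F}_{y,i}\to\bzero$), (\ref{eq:desired_task_behavior}) reduces asymptotically to $\bm{K}_y\tilde{\by}_i\to\bm{F}_{y,i}$, a force balance that, together with the convergence of $S$, pins each AM to a well-defined limiting configuration. I expect this last step to be the main obstacle: the energy argument delivers $\dot{\by}_i\to\bzero$ effortlessly, but concluding that $\tilde{\by}_i$ settles at a single equilibrium point---rather than drifting along a level set of $S$---requires carefully controlling the interaction force $\bm{F}_{y,i}$ (equivalently, showing $\ddot{\by}\to\bzero$) and invoking the positive-definiteness of $\bm{K}_y$, most cleanly through a LaSalle-type invariance argument on the interconnected system whose largest invariant set consists of the isolated configurations determined by the steady-state grasping forces.
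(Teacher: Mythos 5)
Your proposal is correct in its core and follows essentially the same route as the paper: the paper's proof uses exactly your composite storage function $S_{tot}=S_{AM1}+S_{AM2}+S_{obj}$, performs exactly your cancellation of the power-exchange terms via the interconnection of Fig.~\ref{fig:two_collaboration} (with $\dot{\by}=-\dot{\by}_1-\dot{\by}_2$ and the same $\bm{F}_y$ entering both AMs), and is left with $\dot{S}_{tot}\leq-\lambda_m(\bm{D}_y)\norm{\dot{\bar{\by}}}^2$. The differences are in the bookkeeping and the endgame. First, the paper drops the $W_3$ terms (noting separately that the $\dot{\br}_c$- and $\bw_b$-dynamics converge regardless of interaction), integrates the dissipation inequality to get $\dot{\bar{\by}}\in L^2$, and invokes uniform continuity (bounded $\ddot{\bar{\by}}$, in a footnote) to conclude $\dot{\bar{\by}}\to\bzero$; your application of Barbalat to $\dot{S}$ directly, keeping $W_{3,i}$, is an equivalent and slightly more informative variant. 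Second, and more substantively: the paper's final step is simply the assertion that $\dot{\bar{\by}}\to\bzero$ implies $\bar{\by}$ converges, which is strictly speaking a gap ($\dot{y}\to 0$ alone does not yield convergence of $y$); you correctly identify this as the weak point and sketch a LaSalle-type repair. One caution about your sketch: the intermediate claim $\bm{F}_y\to\bzero$ requires $\ddot{\by}\to\bzero$, which does not follow from $\dot{\by}\to\bzero$ along arbitrary trajectories---it holds \emph{on the largest invariant set} if you run LaSalle with constant references ($\dot{\by}_d=\bzero$, constant $\ddot{\br}_{c,d}$, making the closed loop autonomous). Note also that completing this argument within the theorem's single-mass abstraction would pin the invariant set at $\tilde{\by}_i=\bzero$, a \emph{stronger} conclusion than the theorem's ``a certain configuration''; the nonzero steady-state $\tilde{y}_x$ observed in the paper's simulations comes from contact and gravity effects that the footnoted object model deliberately omits, so there is no contradiction, but it shows the abstraction is where the characterization of the limit lives. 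In short: your approach matches the paper's, and your LaSalle completion, if carried out, would actually tighten the paper's loosest step.
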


\begin{proof}
    Since the convergence of $\dot{\bm{r}}_c$- and $\bm{w}_b$-dynamics can be achieved regardless of interaction, it is sufficient to consider $\by$-dynamics in the analysis. Let ${S_{tot}=S_{AM1}+S_{AM2}+S_{obj}}$ be the storage function. The time derivative yields, using (\ref{eq:storage_diff_single_am}) and (\ref{eq:S_obj_deriv}), 
    \begin{align}
        \nonumber \dot{S}_{tot} \leq & -\lambda_{m}(\bm{D}_y)(||\dot{\by}_1||^2+||\dot{\by}_2||^2)  \\
        \nonumber &+ {\underbrace{(-\dot{\by}_1 - \dot{\by}_2)}_{\dot{\by} \text{ in Fig. \ref{fig:two_collaboration}}}}^T\bm{F}_y + \dot{\by_1}^T \bm{F}_y + \dot{\by_2}^T \bm{F}_y \\
        &= -\lambda_{m}(\bm{D}_y)||\dot{\bar{\by}}||^2,
    \end{align}
    where $\bar{\by} = [\by_1^T, \; \by_2^T]^T$. Integrating back,
    \begin{align}
        -\int \lambda_{m}(\bm{D}_y)||\dot{\bar{\by}}||^2 \geq S_{tot}(t) - S_{tot}(0) \geq -S_{tot}(0).
    \end{align}
    It follows that $\int ||\dot{\bar{\by}}||^2 \leq c^2$, for some constant $c$. This upper bound implies that $\dot{\bar{\by}}$ converges to $\bzero$, which again implies that $\bar{\by}$ converges to a certain value.\footnote{To be precise, uniform continuity of $\dot{\bar{\by}}$ is required, which is guaranteed by boundedness of $\ddot{\bar{\by}}$.}
\end{proof}

\subsection{Controller discussion}

\subsubsection{Extension to many AMs}
\label{sec:discussion_scalability}
Collaborative grasping using two AMs is analyzed in Theorem \ref{thm:stability_grasping}. In fact, this can be extended to an arbitrary number of AMs. Using $S_{tot}$ as a storage function, it can be easily show that the new I/O pair $(\dot{\by}_{new},\bm{F}_y)$ in Fig. \ref{fig:feedback_interconnection}(a) is passive. Then, this I/O port can be again interconnected with `AM 3' by feedback that preserves passivity (see Fig. \ref{fig:feedback_interconnection}(b)). By repeating this process $k$ times, Theorem \ref{thm:stability_grasping} can be extended to many AMs (see Fig. \ref{fig:feedback_interconnection}(c)). Note that, with this process, the states of other AMs do not appear in the control law (\ref{eq:Com_controller}), (\ref{eq:ori_controller}), and (\ref{eq:u3}), resulting in a decentralized control scheme.

An interesting consequence of Theorem \ref{thm:stability_grasping} is that, while hovering, the AMs converge to a certain configuration at which they can balance each other (recall Fig. \ref{fig:collaborative_grasping}). This is practically appealing, because, it easily becomes impractical to compute the desired orientation and the thrust force as the number of AMs increases. Simulations in Section \ref{sec:simulation} will demonstrate this property.

 \subsubsection{Force/Torque sensor measurement}
\label{sec:discussion_ft}

One drawback of the proposed control would be the necessity of a force/torque sensor at the end-effector because force compensation terms appear in (\ref{eq:Com_controller}) and (\ref{eq:ori_controller}). We claim that the force compensation can be neglected in practice.

To theoretically support this claim, we remark that the Lyapunov function $V_1$ used in Theorem \ref{thm:stability} can be used as an input-to-state stability (ISS)-Lyapunov function. The ISS property indicates that the system is robust against input ($\bm{F}_e$ for our case), in the sense that the states will be bounded for bounded input. Moreover, physically speaking, since (\ref{eq:Com_controller}) and (\ref{eq:ori_controller}) define a mass-spring-damper behavior, the control can balance external forces without explicit compensation.

\subsubsection{Force closure}

Despite the fact the force closure condition is crucial to guarantee stable grasping \cite{murray2017mathematical}, some studies proposed an alternative way to accomplish grasping without considering the force closure explicitly. This is done by applying some forces around the object using impedance controllers \cite{wimbock2007impedance, wimboeck2006passivity}, expecting that the forces automatically form a force closure. This paper also employs this strategy.

\subsubsection{CoM trajectory}
Employing the coordinate transformation $\bxi=\bm{T}\bV$, the control is designed using the CoM dynamics (recall (\ref{eq:new_dynamics})). This implies that the desired trajectory should be designed for the CoM, not the base frame. One way to resolve this issue is to compute the CoM trajectory from the AM's desired trajectory. Another way is to design the system to have a lightweight arm so that the CoM is very closely located at the geometric center of the floating base.

\section{Simulation Validation}
\label{sec:simulation}
To validate the proposed controller, the collaborative grasping is simulated in PyBullet environments with  $0.001\mathrm{s}$ time step. A 3 DOF non-redundant planar manipulator is attached to the quadrotor, as shown in Fig. \ref{fig:collaborative_grasping}.  The task variable is defined by $\by= [y_x,\; y_z, \; y_\theta]^T\in\mathbb{R}^3$, where $y_{x}$, $y_{z}$ represent the position, and $y_{\theta}$ represents the orientation of the end-effector with respect to $\{w\}$.

The AMs are initially positioned symmetrically away from the object whose mass is $1\mathrm{kg}$. The simulation scenario is set as follows.
\begin{itemize}
	\item ${t=0\mathrm{s} - 5\mathrm{s}}$: approaching to an object (free-flight)
	\item ${t=5\mathrm{s} - 10\mathrm{s}}$: grasping an object
	\item ${t=10\mathrm{s} - 20\mathrm{s}}$: lifting an object and hovering 
\end{itemize}
Although multiple AMs are used in the simulation, we provide plots for only one AM, because the resulting plots are almost the same for all AMs.

\begin{figure} [!t]
\centering
	\subfigure[]
	{\includegraphics[scale=0.49]{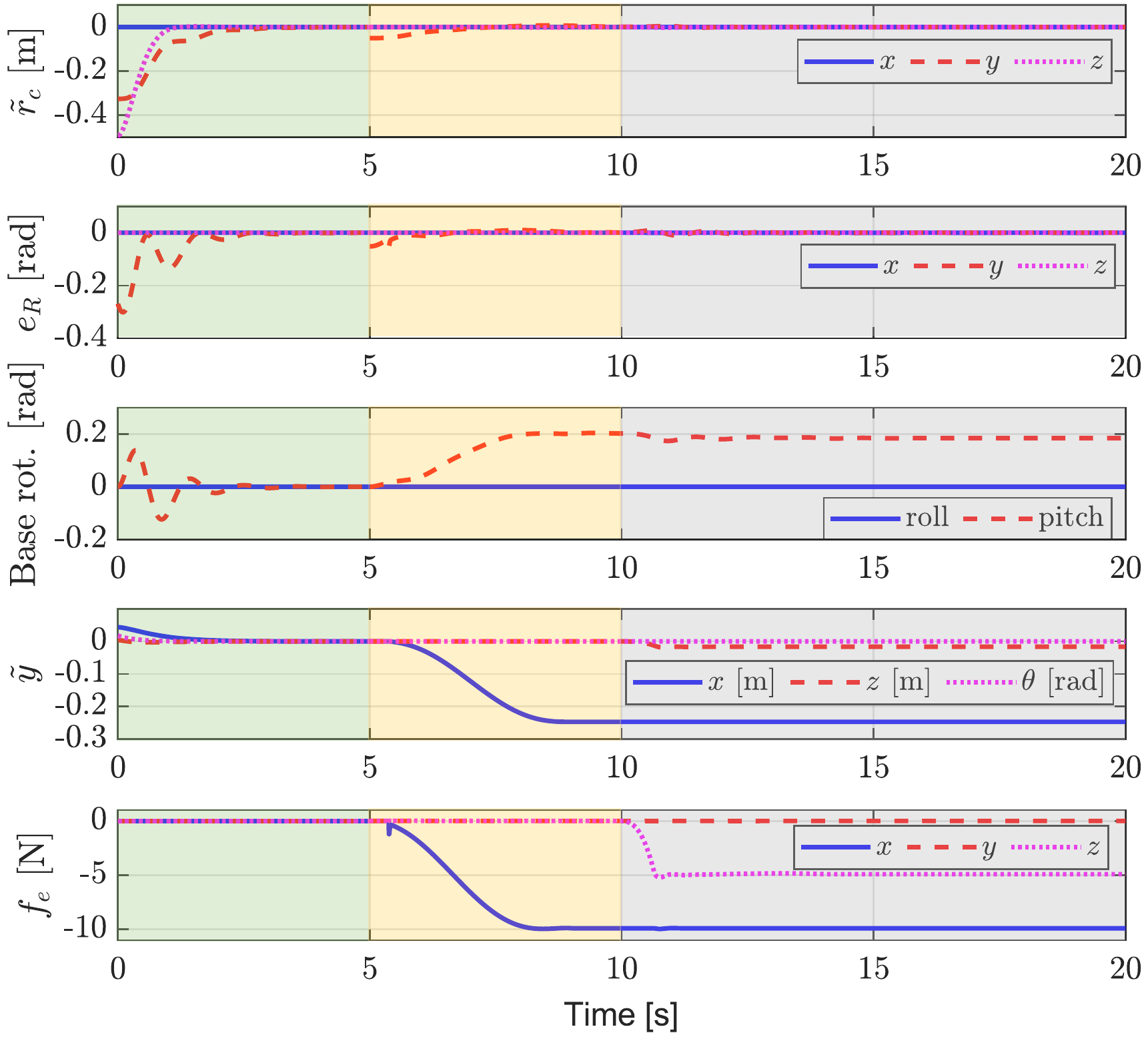}}
	\centering
	\subfigure[]
	{\includegraphics[scale=0.49]{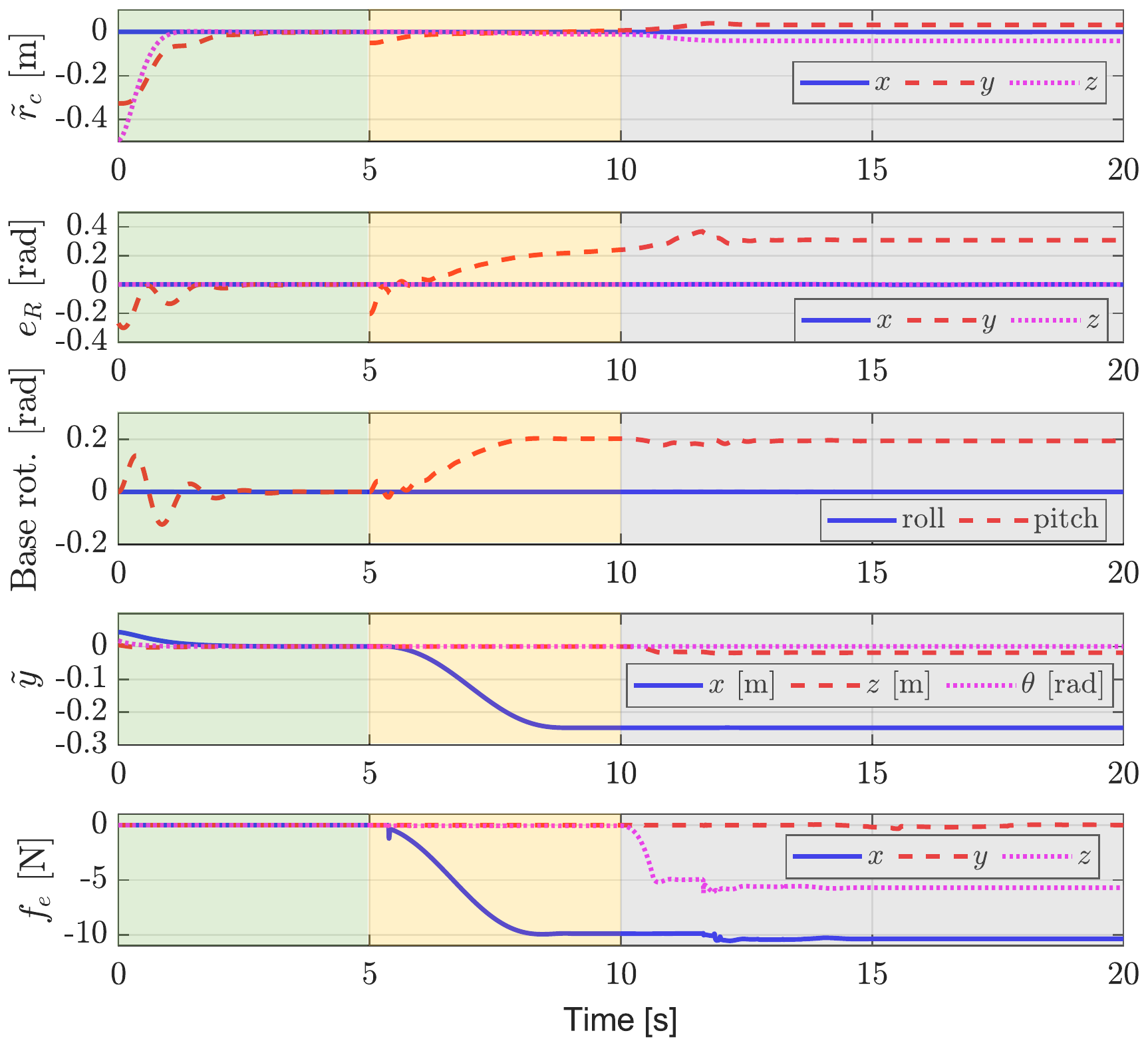}}
	\centering
	\caption{Simulation results for one AM. The green, yellow, and gray areas correspond to approaching, grasping, and hovering, respectively. (a) Collaborative grasping using two AMs with force compensation.
	(b) Collaborative grasping using two AMs without force compensation. 
	}
	\label{fig:simulation}
\end{figure}

\subsection{Collaborative grasping using two AMs with force compensation}
In the first simulation, two AMs are used to grasp the object  using the controller (\ref{eq:Com_controller}), (\ref{eq:ori_controller}), and (\ref{eq:u3}); see Fig. \ref{fig:collaborative_grasping}. The simulation results are shown in Fig. \ref{fig:simulation}(a). During the free-flight ($0\mathrm{s}\leq t\leq5\mathrm{s}$), the CoM position error, the floating base orientation error, and the task error converge to zero as shown in the first, second, and fourth rows of Fig. \ref{fig:simulation}(a). Therefore, asymptotic stability stated in Theorem \ref{thm:stability} is verified. 

During collaborative grasping ($5\mathrm{s}\leq t\leq10\mathrm{s}$), the external force $\bm{f}_e$ measured by the F/T sensor is shown in the fifth row of Fig. \ref{fig:simulation}(a). $-f_{e, x}$ and $-f_{e, z}$ imply the normal contact force and the tangential contact force, respectively. Since $y_{x, d}$ is defined inside the object, the normal contact force is generated. The tangential contact force is zero because the object is located on the table. 

After grasping, two aerial manipulators lift the object and hover at ${t\geq10\mathrm{s}}$. While hovering, slipping between the end-effector and the object does not occur because the impedance control generates the sufficiently large grasping force that causes appropriate friction force. Note that $f_{e,z}$ in the fifth row of Fig. \ref{fig:simulation}(a) is $-4.9\mathrm{N}$ which is half of the gravitational force on the object.
Another important observation is that the two quadrotors are tilted in the direction of the grasping force as shown in Fig. \ref{fig:collaborative_grasping}; see also the red dashed line in the third row of Fig. \ref{fig:simulation}(a). At this point, the CoM and the orientation error are zero because of the force compensation (recall, from (\ref{eq:Com_controller}) and (\ref{eq:Rd}), that the desired orientation is defined using the desired thrust force that contains the external force). On the other hand, since the $\by$-dynamics is a mass-spring-damper system with external force (see (\ref{eq:desired_task_behavior})), a constant error occurs for $\tilde{y}_x$. The foregoing observation verifies Theorem \ref{thm:stability_grasping}.

\subsection{Collaborative grasping using two AMs without force compensation}
In the second simulation, the external force compensation is not applied in the control law (\ref{eq:Com_controller}) and (\ref{eq:ori_controller}). The simulation results are shown in Fig. \ref{fig:simulation}(b). The free-flight phase is the same as the first simulation. When hovering ($t\geq10\mathrm{s}$), the CoM error and the floating base orientation error converge to a certain point, which is not zero as shown in the first two rows of Fig. \ref{fig:simulation}(b). Nevertheless, it should be pointed out that the AMs are successfully grasping the object. This verifies the discussion in Section \ref{sec:discussion_ft}.

\begin{figure} [!t]
\centering
	{\includegraphics[scale=0.43]{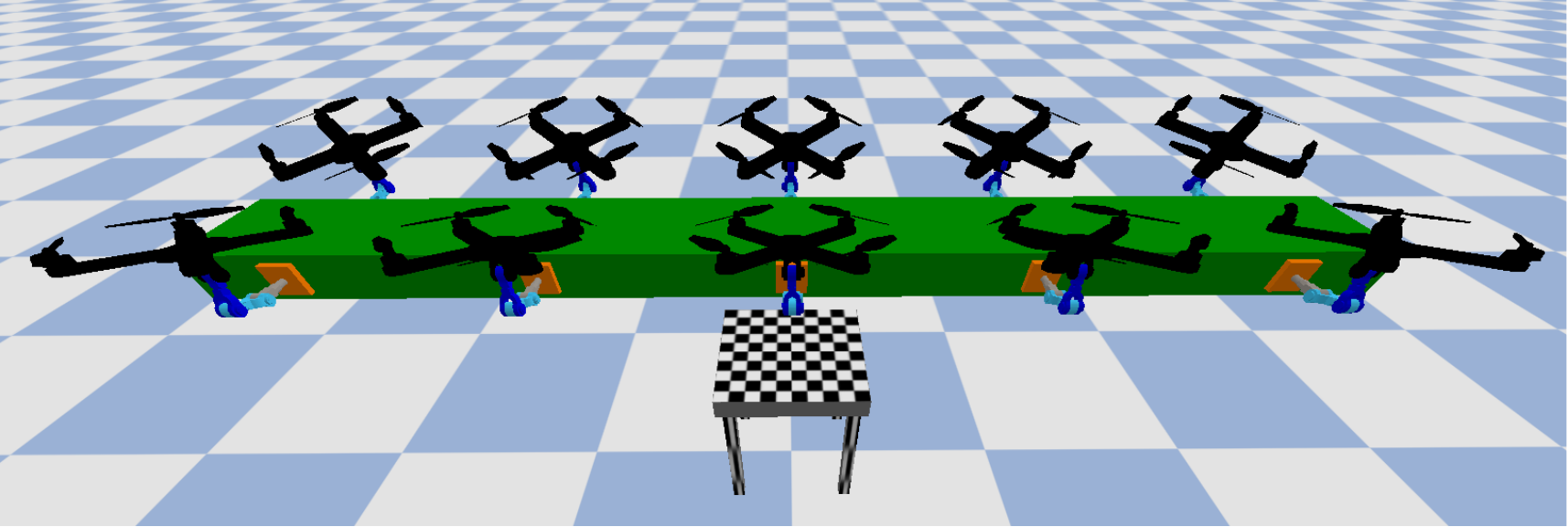}}
	\centering
	\caption{A snapshot of collaborative grasping using 10 AMs.
	}
	\label{fig:hovering}
\end{figure}

\subsection{Collaborative grasping using 10 AMs}
To validate the scalability of the proposed control scheme (recall Section \ref{sec:discussion_scalability}), a collaborative grasping is performed using 10 AMs, as shown in Fig. \ref{fig:hovering}. In this simulation, the object mass is $5\mathrm{kg}$. Although the resulting plot is omitted because it is very similar to the previous ones, the video attachment demonstrates this scenario.

\section{Conclusion}
\label{sec:conclusion}
This paper presents a passive impedance control scheme to accomplish collaborative grasping using under-actuated aerial manipulators (AMs). The proposed control scheme is decentralized in the sense that each AM can be controlled entirely using only its own states.  Modularity, which is gained by passivity (Theorem \ref{thm:passivity}), enables an arbitrary number of AMs to collaborate stably. During the collaborative grasping, each system automatically converges to a certain configuration at which AMs can balance the interaction forces (Theorem \ref{thm:stability_grasping}). Apart from grasping, uniform asymptotic stability is guaranteed for the free-flight phase (Theorem \ref{thm:stability}). Simulation confirms that the collaborative grasping can be accomplished using multiple AMs. 
\bibliographystyle{IEEEtran}
\bibliography{IEEEabrv,ICRA2023}

\end{document}